\newcommand{\R}{\mathbb{R}}
\newcommand{\PreDim}{n}
\newcommand{\PreSpace}{\R^\PreDim}
\newcommand{\PreSubset}{A}
\newcommand{\PrePoint}{x}
\newcommand{\PostDim}{m}
\newcommand{\PostSpace}{\R^\PostDim}
\newcommand{\PostPoint}{y}
\newcommand{\ModelFamily}{M}
\newcommand{\NeuralNetFamily}[1]{\mathcal{N}_{#1}}
\newcommand{\NetFamilyFamily}[1]{\mathcal{N}^*_{#1}}
\newcommand{\NetProperFamily}[1]{\hat{\mathcal{N}_{#1}}}
\newcommand{\ParamDim}{k}
\newcommand{\FunctionSpace}[2]{\mathcal{C}(#1, #2)}
\newcommand{\CandidateFunction}{f}
\newcommand{\TargetFunction}{g}
\newcommand{\LayerWidth}{n}
\newcommand{\NetLength}{\kappa}
\newcommand{\distance}[2]{|#1 - #2|}
\newcommand{\ActivationFn}{\varphi}
\newcommand{\OpenSet}{U}
\newcommand{\SequenceIndex}{i}
\newcommand{\BoundedComponent}{C}
\newcommand{\BoundedOpen}{\hat{\BoundedComponent}}
\newcommand{\BoundedRadius}{r}
\newcommand{\LargerRadius}{R}
\newcommand{\SmallY}{\varepsilon}
\newcommand{\SmallX}{\delta}
\newcommand{\Ball}[2]{B_{#1}({#2})}
\newcommand{\Path}{\ell}
\newcommand{\hide}[1]{}
\newcommand{\Cutoff}[1]{\hat{#1}}
\newcommand{\LastLayer}[1]{\bar{#1}}
\newcommand{\FnImage}[1]{I_{#1}}
\newcommand{\ComponentBuffer}{\mu}
\newcommand{\Neighborhood}[1]{\eta_{#1}}
\newcommand{\Frontier}{F}
\newcommand{\LinearFn}{\ell}
\newcommand{\NonLinearFn}{\nu}
\newcommand{\CompositionLength}{\kappa}
\newtheorem{theorem}{Theorem}
\newtheorem{lemma}[theorem]{Lemma}
\theoremstyle{definition}
\newtheorem{definition}{Definition}
\title{Deep, Skinny Neural Networks are not Universal Approximators}
\author{
	Jesse Johnson \\
	Sanofi \\
     \texttt{jejo.math@gmail.com} \\
}
\begin{document}
\maketitle

\begin{abstract}
In order to choose a neural network architecture that will be effective for a particular modeling problem, one must understand the limitations imposed by each of the potential options. These limitations are typically described in terms of information theoretic bounds, or by comparing the relative complexity needed to approximate example functions between different architectures. In this paper, we examine the topological constraints that the architecture of a neural network imposes on the level sets of all the functions that it is able to approximate. This approach is novel for both the nature of the limitations and the fact that they are independent of network depth for a broad family of activation functions.
\end{abstract}

\section{Introduction}

Neural networks have become the model of choice in a variety of machine learning applications, due to their flexibility and generality. However, selecting network architectures and other hyperparameters is typically a matter of trial and error. To make the choice of neural network architecture more straightforward, we need to understand the limits of each architecture, both in terms of what kinds of functions any given network architecture can approximate and how those limitations impact its ability to learn functions within those limits.

A number of papers~\cite{barron1994approximation, cybenko1989approximation, funahashi1989approximate, hornik1989multilayer} have shown that neural networks with a single hidden layer are a universal approximator, i.e.\ that they can approximate any continuous function on a compact domain to arbitrary accuracy if the hidden layer is allowed to have an arbitrarily high dimension. In practice, however, the neural networks that have proved most effective tend to have a large number of relatively low-dimensional hidden layers. This raises the question of whether neural networks with an arbitrary number of hidden layers of bounded dimension are also a universal approximator.

In this paper we demonstrate a fairly general limitation on functions that can be approximated with the $L^\infty$ norm on compact subsets of a Euclidean input space by layered, fully-connected feed-forward neural networks of arbitrary depth and activation functions from a broad family including sigmoids and ReLus, but with layer widths bounded by the dimension of the input space. By a \emph{layered} network, we mean that hidden nodes are grouped into successive layers and each node is only connected to nodes in the previous layer and the next layer. The constraints on the functions are defined in terms of topological properties of the level sets in the input space.

This analysis is not meant to suggest that deep networks are worse than shallow networks, but rather to better understand how and why they will perform differently on different data sets. In fact, these limitations may be part of the reason deep nets have proven more effective on datasets whose structures are compatible with these limitations.

By a level set, we mean the set of all points in the input space that the model maps to a given value in the output space. For classification models, a level set is just a decision boundary for a particular cutoff. For regression problems, level sets don't have a common interpretation.

The main result of the paper, Theorem~\ref{mainthm}, states that the deep, skinny neural network architectures described above cannot approximate any function with a level set that is bounded in the input space. This can be rephrased as saying that for every function that can be approximated, every level set must be unbounded, extending off to infinity.

While a number of recent papers have made impressive progress in understanding the limitations of different neural network architectures, this result is notable because it is independent of the number of layers in the network, and because the limitations are defined in terms of a very simple topological property. Topological tools have recently been employed to study the properties of data sets within the field known as Topological Data Analysis~\cite{edelsbrunner2008persistent}, but this paper exploits topological ideas to examine the topology of the  models themselves. By demonstrating topological constraints on a widely used family of models, we suggest that there is further potential to apply topological ideas to understand the strengths and weaknesses of algorithms and methodologies across machine learning. 

After discussing the context and related work in Section~\ref{section:related}, we introduce the basic definitions and notation in Section~\ref{sect:terminology}, then state the main Theorem and outline the proof in Section~\ref{section:outline}. The detailed proof is presented in Sections~\ref{section:nonsingular} and \ref{section:levelsets}. We present experimental results that demonstrate the constraints in Section~\ref{section:experiments}, then in Section~\ref{section:conclusion} we present conclusions from this work.

\section{Related Work}
\label{section:related}

A number of papers have demonstrated limitations on the functions that can be approximated by neural networks with particular architectures~\cite{ba2014deep, harvey2017nearly, khrulkov2017expressive, liang2016deep, martens2013representational, martens2014expressive, montufar2015geometry, montufar2014universal, montufar2014number, montufar2011expressive, nguyen2017loss, pascanu2013number, rolnick2017power, yarotsky2017error}. These are typically presented as asymptotic bounds on the size of network needed to approximate any function in a given family to a given $\SmallY$.

Lu et al~\cite{lu2017expressive} gave the first non-approximation result that is independent of complexity, showing that there are functions that no ReLu-based deep network of width equal to the dimension of the input space can approximate, no matter how deep. However, they consider convergence in terms of the $L^1$ norm on the entire space $\PreSpace$ rather than $L^\infty$ on a compact subset. This is a much stricter definition than the one used in this paper so even for ReLu networks, Theorem~\ref{mainthm} is a stronger result.

The closest existing result to Theorem~\ref{mainthm} is a recent paper by Nguyen, Mukkamala and Hein~\cite{nguyen2018neural} which shows that for multi-label classification problems defined by an argmax condition on a higher-dimensional output function, if all the hidden layers of a neural network have dimension less than or equal to the input dimension then the region defining each class must be connected. The result applies to one-to-one activation functions, but could probably be extended to the family of activation functions in this paper by a similar limiting argument.

Universality results have been proved for a number of variants of the networks described in Theorem~\ref{mainthm}. Rojas~\cite{rojas2003networks} showed that any two discrete classes of points can be separated by a decision boundary of a function defined by a deep, skinny network in which each layer has a single perceptron that is connected both to the previous layer and to the input layer. Because of the connections back to the input space, such a network is not layered as defined above, so Theorem~\ref{mainthm} doesn't contradict this result. In fact, to carry out Rojas' construction with a layered feed-forward network, you would need to put all the perceptrons in a single hidden layer.

Sutskever and Hinton~\cite{sutskever2008deep} showed that deep belief networks whose hidden layers have the same dimension as the input space can approximate any function over binary vectors. This binary input space can be interpreted as a discrete subset of Euclidean space. So while Theorem~\ref{mainthm} does not apply to belief networks, it's worth noting that any function on a discrete set can be extended to the full space in such a way that the resulting function satisfies the constraints in Theorem~\ref{mainthm}.

This unexpected constraint on skinny deep nets raises the question of whether such networks are so practically effective despite being more restrictive than wide networks, or because of it. Lin, Tegmark and Rolnick~\cite{lin2017does} showed that for data sets with information-theoretic properties that are common in physics and elsewhere, deep networks are more efficient than shallow networks. This may be because such networks are restricted to a smaller search space concentrated around functions that model shapes of data that are more likely to appear in practice. Such a conclusion would be consistent with a number of papers showing that there are functions defined by deep networks that can only by approximated by shallow networks with asymptotically much larger number of nodes~\cite{cohen2016expressive, delalleau2011shallow, eldan2016power, mhaskar2016deep, telgarsky2016benefits}.

A slightly different phenomenon has been observed for recurrent neural networks, which are universal approximators of dynamic systems~\cite{doya1993universality}. In this setting, Collins, Sohl-Dickstein and Sussillo~\cite{collins2016capacity} showed that many differences that have been reported on the performance of RNNs are due to their training effectiveness, rather than the expressiveness of the networks. In other words, the effectiveness of a given family of models appears to have less to do with whether it includes an accurate model, and more to do with whether a model search algorithm like gradient descent is likely to find an accurate model within the search space of possible models.

\section{Terminology and Notation}
\label{sect:terminology}

A \emph{model family} is a subset $\ModelFamily$ of the space $\FunctionSpace{\PreSpace}{\PostSpace}$ of continuous functions from input space $\PreSpace$ to output space $\PostSpace$. For parametric models, this subset is typically defined as the image of a map $\R^\ParamDim \to \FunctionSpace{\PreSpace}{\PostSpace}$, where $\R^\ParamDim$ is the parameter space. A non-parametric model family is typically the union of a countably infinite collection of parametric model families. We will not distinguish between parametric and non-parametric families in this section.

Given a function $\TargetFunction : \PreSpace \to \PostSpace$, a compact subset $\PreSubset \subset \PreSpace$ and a value $\epsilon > 0$, we will say that a second function $\CandidateFunction$ \emph{$(\epsilon, \PreSubset)$-approximates} $\TargetFunction$ if for every $\PrePoint \in \PreSubset$, we have $\distance{\CandidateFunction(\PrePoint)}{\TargetFunction(\PrePoint)} < \epsilon$. Similarly, we will say that a model family $\ModelFamily$ \emph{$(\epsilon, \PreSubset)$-approximates} $\TargetFunction$ if there is a function $\CandidateFunction$ in $\ModelFamily$ that $(\epsilon, \PreSubset)$-approximates $\TargetFunction$.

More generally, we will say that $\ModelFamily$ \emph{approximates} $\CandidateFunction$ if for every compact $\PreSubset \subset \PreSpace$ and value $\epsilon > 0$ there is a function $\CandidateFunction$ in $\ModelFamily$ that $(\epsilon, \PreSubset)$-approximates $\TargetFunction$. This is equivalent to the statement that there is a sequence of functions $\CandidateFunction_i \in \ModelFamily$ that converges pointwise (though not necessarily uniformly) to $\TargetFunction$ on all of $\PreSpace$. However, we will use the $(\epsilon, \PreSubset)$ definition throughout this paper.

We'll describe families of layered neural networks with the following notation: Given an activation function $\ActivationFn : \R \to \R$ and a finite sequence of positive integers $\LayerWidth_0, \LayerWidth_1, \ldots, \LayerWidth_\NetLength$, let $\NeuralNetFamily{\ActivationFn, \LayerWidth_0, \LayerWidth_1, \ldots, \LayerWidth_\NetLength}$ be the family of functions defined by a layered feed-forward neural network with $\LayerWidth_0$ inputs, $\LayerWidth_\NetLength$ outputs and fully connected hidden layers of width $\LayerWidth_1, \ldots, \LayerWidth_{\NetLength - 1}$.

With this terminology, Hornik et al's results can be restated as saying that the (non-parametric) model family defined as the union of all families $\NeuralNetFamily{\ActivationFn, \LayerWidth_0, \LayerWidth_1, 1}$ approximates any continuous function. (Here, $\NetLength = 2$ and $\LayerWidth_2 = 1$.)

We're interested in deep networks with bounded dimensional layers, so we'll let $\NetFamilyFamily{\ActivationFn, \LayerWidth}$ be the union of all the model families $\NeuralNetFamily{\ActivationFn, \LayerWidth_0, \LayerWidth_1, \ldots, \LayerWidth_{\NetLength - 1}, 1}$ such that  $\LayerWidth_i \le \LayerWidth$ for all $i < \NetLength$.

For the main result, we will restrict our attention to a fairly large family of activation functions. We will say that an activation function $\ActivationFn$ is \emph{uniformly approximated by one-to-one functions} if there is a sequence of continuous, one-to-one functions that converge to $\ActivationFn$ uniformly (not just pointwise). 

Note that if the activation function is itself one-to-one (such as a sigmoid) then we can let every function in the sequence be $\ActivationFn$ and it will converge uniformly. For the ReLu function, we need to replace the the large horizontal portion with a function such as $\frac{1}{n} \arctan(x)$. Since this function is one-to-one and negative for $x < 0$, each function in this sequence will be one-to-one. Since it's bounded between $-\frac{1}{n}$ and $0$, the sequence will converge uniformly to the ReLu function.

\section{Outline of the Main Result}
\label{section:outline}

The main result of the paper is a topological constraint on the level sets of any function in the family of models $\NetFamilyFamily{\ActivationFn, \LayerWidth}$. To understand this constraint, recall that in topology, a set $\BoundedComponent$ is \emph{path connected} if any two points in $\BoundedComponent$ are connected by a continuous path within $\BoundedComponent$. A \emph{path component} of a set $\PreSubset$ is a subset $\BoundedComponent \subset \PreSubset$ that is connected, but is not a proper subset of a larger connected subset of $\PreSubset$.

\begin{definition}
We will say that a function $\CandidateFunction : \PreSpace \to \R$ has \emph{unbounded level components} if for every $\PostPoint \in \R$, every path component of $\CandidateFunction^{-1}(\PostPoint)$ is unbounded.
\end{definition}

The main result of this paper states that deep, skinny neural networks can only approximate functions with unbounded level components. Note that this definition is stricter than just requiring that every level set be bounded. The stricter definition in terms of path components guarantees that the property is preserved by limits, a fact that we will prove, then use in the proof of Theorem~\ref{mainthm}. Just having bounded level sets is not preserved under limits.

\begin{theorem}
\label{mainthm}
For any integer $\LayerWidth \ge 2$ and uniformly continuous activation function $\ActivationFn : \R \to \R$ that can be approximated by one-to-one functions, the family of layered feed-forward neural networks with input dimension $\LayerWidth$ in which each hidden layer has dimension at most $\LayerWidth$ cannot approximate any function with a level set containing a bounded path component.
\end{theorem}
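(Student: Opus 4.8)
The plan is to establish the contrapositive in a strong form: every function that actually lies in the family $\NetFamilyFamily{\varphi, n}$ already has unbounded level components, and this property survives the approximation limit, so no function possessing a bounded level path component can be approximated. The organizing tool is the preservation lemma announced in Section~\ref{section:outline} --- that ``unbounded level components'' is closed under $(\epsilon, A)$-limits --- which I will take as given. It reduces the theorem to a statement about individual network functions and, moreover, lets me simplify those functions twice before analyzing them. First, using that $\varphi$ is uniformly approximated by one-to-one functions, I replace $\varphi$ by a one-to-one $\varphi_k$ throughout a given network; the resulting $f_k$ converge to $f$ uniformly on compact sets, so if each $f_k$ has unbounded level components then so does $f$. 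Second, since full-rank (\emph{nonsingular}) matrices are open and dense, I perturb each weight matrix to full rank, again obtaining functions converging uniformly on compacta to the original; so it suffices to treat a single $f\in\NetFamilyFamily{\varphi,n}$ whose activation is one-to-one and whose every affine map has full rank.

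For such an $f$, write it as a composition $\R^{n}=\R^{n_0}\to\R^{n_1}\to\cdots\to\R^{n_{\kappa-1}}\to\R$, where each arrow except the last is an affine map $A_i$ followed by a coordinatewise $\varphi$, and the last is an affine functional $\ell$; all $n_i\le n$. Let $h_i:\R^{n}\to\R^{n_i}$ denote the partial composition through layer $i$, so $f=\ell\circ h_{\kappa-1}$ and each level set is $h_{\kappa-1}^{-1}(P)$ for the hyperplane $P=\ell^{-1}(y_0)$. The key structural remark is that full rank together with $n_i\le n$ forces each $A_i$ to be \emph{surjective} onto $\R^{n_i}$, with fibers that are affine subspaces of dimension $n_{i-1}-n_i$. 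I then split on the first index $j$ at which the width drops, $n_j<n$ (equivalently $n_1=\cdots=n_{j-1}=n$). Up to that point every $A_i$ is bijective and every $\varphi$ is injective, so $h_{j-1}$ is a continuous injection $\R^{n}\to\R^{n}$; by invariance of domain it is a homeomorphism onto an open set $V:=h_{j-1}(\R^{n})\subseteq\R^{n}$.

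The engine of the proof is then a single \emph{frontier principle}: because $h_{j-1}$ is a homeomorphism onto the open set $V$, a subset $S\subseteq\R^{n}$ is unbounded if and only if the closure of $h_{j-1}(S)$ in $\R^{n}$ meets the topological frontier $\partial V$ (if $S$ were bounded its closure would be compact and map into a compact subset of $V$, forbidding accumulation on $\partial V$, and conversely). Now I conclude in the two cases. If the width drops at layer $j$, the fibers of $A_j$ are positive-dimensional affine subspaces $L$; a connected component $W$ of $L\cap V$ is open in $L$ and, not being closed in $L$ unless it is all of $L$, has its frontier inside $\partial V$, so $h_{j-1}^{-1}(W)$ is unbounded by the principle. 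Since $f$ factors through $h_j$ it is constant on each such fiber, so every path component of a level set contains an entire unbounded set $h_{j-1}^{-1}(W)$ and is therefore unbounded. If no width ever drops, $h_{\kappa-1}$ is itself a homeomorphism onto an open $V\subseteq\R^{n}$ and the level set is $h_{\kappa-1}^{-1}(P\cap V)$; each path component of the planar slice $P\cap V$ is either all of $P$ or has frontier on $\partial V$, and in either case pulls back to an unbounded set. Every level path component of $f$ is thus unbounded.

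I expect the main obstacle to be the core topological argument rather than the reductions. Making the frontier principle rigorous rests on invariance of domain to guarantee that the injective partial network is a homeomorphism onto an \emph{open} image, and on carefully relating path components upstairs and downstairs through $h_{j-1}$; the identity ``unbounded in the domain $\Leftrightarrow$ accumulating on the frontier of the image'' must be applied to possibly disconnected slices $L\cap V$ and $P\cap V$, so the bookkeeping of components and closures is where errors would hide. A secondary subtlety is the nonsingular reduction: I must ensure the full-rank perturbations stay within the same architecture and converge uniformly on compacta, so that the preservation lemma legitimately transfers the conclusion back to the unperturbed $f$.
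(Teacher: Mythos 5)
Your proposal follows the same three-step architecture as the paper: reduce to ``generic'' networks (one-to-one activation, full-rank weights), show those have unbounded level components, and transfer the property through the approximation limit. The one genuinely different element is how you handle hidden layers of width strictly less than $\LayerWidth$. The paper first pads every hidden layer out to width exactly $\LayerWidth$ by adding neurons with zero weights, so that after perturbation the entire pre-output composition $\Cutoff{\CandidateFunction}$ is injective and, by invariance of domain, a homeomorphism onto an open set; every level set is then the preimage of (open image) $\cap$ (hyperplane), i.e.\ an open subset of $\R^{\LayerWidth-1}$ that is also closed in $\R^\LayerWidth$, hence unbounded or empty. You instead keep the original widths and treat the first genuine width drop by a separate fiber argument. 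Both work; the padding trick collapses your two cases into one, while your version avoids having to justify that padding preserves the function and the architecture.

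Three points need attention. (i) Your claim that full rank together with $n_i\le\LayerWidth$ forces every $A_i$ to be surjective is false when a width increases after a drop ($n_{i-1}<n_i$ gives an injective, non-surjective map); this is harmless because you only use surjectivity of $A_j$ at the first drop, but the statement as written is wrong. (ii) Your ``frontier principle'' is an equivalence only in the direction you actually need (accumulation on $\partial V$ implies unboundedness); the converse fails, and the case $W=L$ (a fiber entirely contained in $V$) is exactly where it fails, so that case requires the separate compactness argument that a closed preimage of a non-compact closed set cannot be compact. (iii) Most importantly, you take the preservation-under-limits statement as given. That is Lemma~\ref{limitisunboundedlem} in the paper, and its proof --- isolating the bounded component inside a bounded open region $\BoundedOpen$ of $\TargetFunction^{-1}(\OpenSet)$ whose frontier misses the level set, then deriving a contradiction from an unbounded path in an approximant's level set --- is one of the two substantive arguments in the paper, not a routine fact. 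As submitted, your proposal establishes only the ``generic networks have unbounded level components'' half of the theorem.
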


The proof of Theorem~\ref{mainthm} consists of two steps. In the first step, described in Section~\ref{section:nonsingular}, we examine the family of functions defined by deep, skinny neural networks in which the activation is one-to-one and the transition matrices are all non-singular. 

We prove two results about this smaller family of functions: First, Lemma~\ref{limitsetsthesamelem} states that any function that can be approximated by $\NetFamilyFamily{\ActivationFn, \LayerWidth}$ can be approximated by functions in this smaller family. This is fairly immediate from the assumptions on $\ActivationFn$ and the fact that singular transition matrices can be approximated by non-singular ones.

Second, Lemma~\ref{planarsubsetlemma} states that the level sets of these functions have unbounded level components. The proof of this Lemma is, in many ways, the core argument of the paper and is illustrated in Figure~\ref{fig:onetoone}. The idea is that any function in this smaller family can be written as a composition of a one-to-one function and a linear projection, as in the top row of the Figure.

\begin{figure}
  \includegraphics[width=\textwidth]{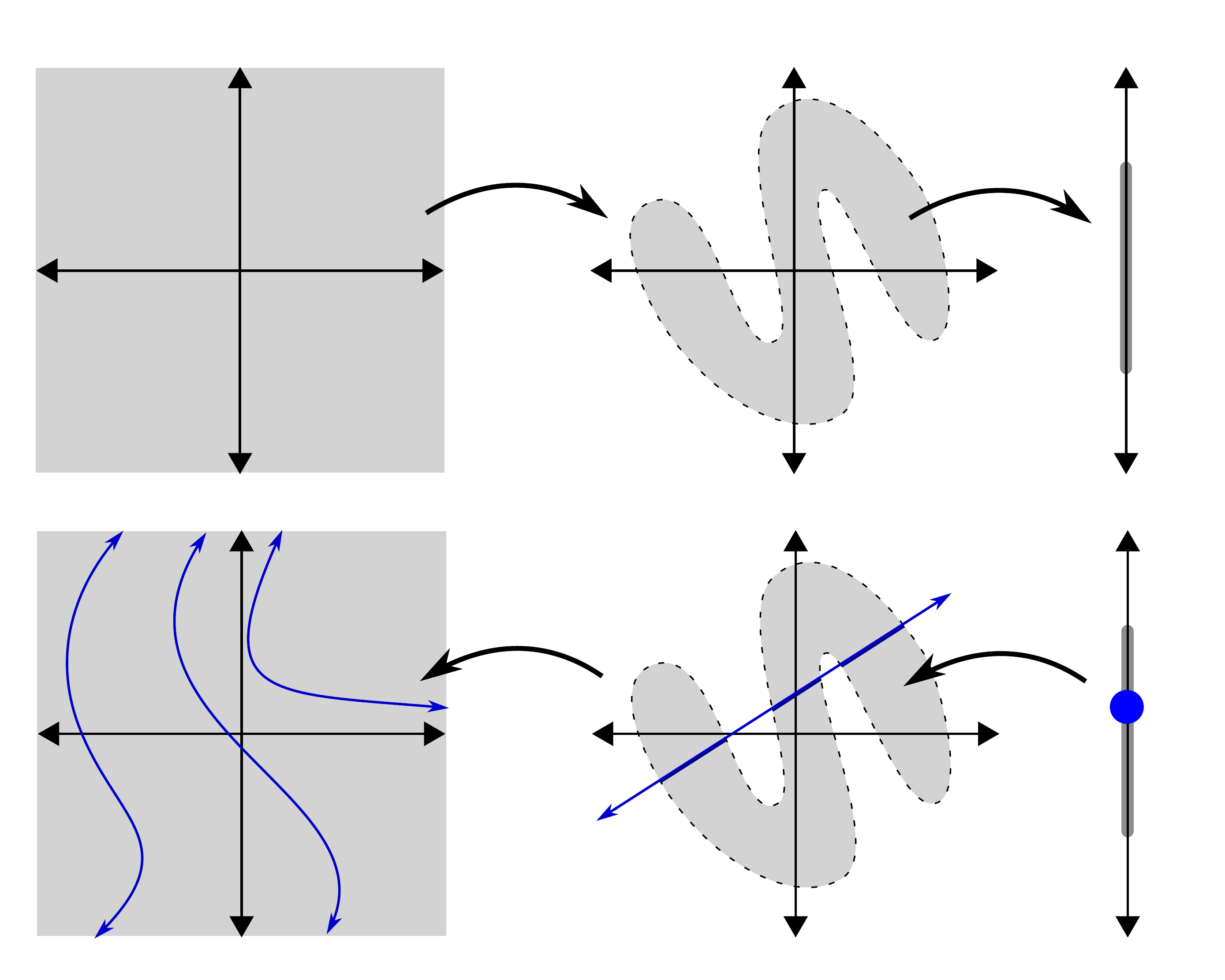}
  \put(-205, 233){$\Cutoff{\CandidateFunction}$}
  \put(-68, 233){$\LastLayer{\CandidateFunction}$}
  \put(-220, 102){$\Cutoff{\CandidateFunction}^{-1}(\LastLayer{\CandidateFunction}^{-1}(\PostPoint))$}
  \put(-72, 102){$\LastLayer{\CandidateFunction}^{-1}(\PostPoint)$}
  \put(-20, 78){$\PostPoint$}
  \caption{A generic function defined by a deep network is the composition of a one-to-one function and a linear function. Each level set is therefore homeomorphic to the intersection of an open topological ball with an $\LayerWidth - 1$-dimensional hyperplane, which forces it to be unbounded.}
  \label{fig:onetoone}
\end{figure}

As suggested in the bottom row, this implies that each level set/decision boundary in the full function is defined by the intersection of the image of the one-to-one function (the gray patch in the middle) with a hyperplane that maps to a single point in the second function. Intuitively, this intersection extends out to the edges of the gray blob, so its preimage in the original space must extend out to infinity in Euclidean space, i.e.\ it must be unbounded.

The second part of the proof of Theorem~\ref{mainthm}, described in Section~\ref{section:nonsingular}, is Lemma~\ref{limitisunboundedlem} which states that the limit of functions with unbounded level components also has unbounded level components. This is a subtle technical argument, though it should be intuitively unsurprising that unbounded sets cannot converge to bounded sets.

The proof of Theorem~\ref{mainthm} is the concatenation of these three Lemmas: If a function can be approximated by $\NetFamilyFamily{\ActivationFn, \LayerWidth}$ then it can be approximated by the smaller model family (Lemma~\ref{limitsetsthesamelem}), so it can be approximated by functions with unbounded level components (Lemma~\ref{planarsubsetlemma}), so it must also have unbounded level components (Lemma~\ref{limitisunboundedlem}).

\section{Characterizing level sets of ``generic" neural net functions}
\label{section:nonsingular}

We will say that a function in $\NetFamilyFamily{\ActivationFn, \LayerWidth}$ is \emph{non-singular} if $\ActivationFn$ is continuous and one-to-one, $\LayerWidth_i = \LayerWidth$ for all $i < k$ and the matrix defined by the weights between each pair of layers is nonsingular. Note that if $\ActivationFn$ is not one-to-one, then $\NetFamilyFamily{\ActivationFn, \LayerWidth}$ will not contain any non-singular functions. If it is one-to-one then $\NetFamilyFamily{\ActivationFn, \LayerWidth}$ will contain a mix of singular and non-singular functions.

Define the \emph{model family of non-singular functions} $\NetProperFamily{\LayerWidth}$ to be the union of all non-singular functions in families $\NetFamilyFamily{\ActivationFn, \LayerWidth}$ for all activation functions $\ActivationFn$ and a fixed $\LayerWidth$.

\begin{lemma}
\label{limitsetsthesamelem}
If $\TargetFunction$ is approximated by $\NetFamilyFamily{\ActivationFn, \LayerWidth}$ for some continuous activation function $\ActivationFn$ that can be uniformly approximated by one-to-one functions then it is approximated by $\NetProperFamily{\LayerWidth}$.
\end{lemma}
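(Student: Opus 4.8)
The plan is to take an arbitrary member of $\NetFamilyFamily{\ActivationFn, \LayerWidth}$ that approximates $\TargetFunction$ and repair it into a non-singular network by three successive modifications, each of which perturbs the realized function by an arbitrarily small amount in $L^\infty$ on a fixed compact set. Fix a compact $\PreSubset \subset \PreSpace$ and $\epsilon > 0$; by hypothesis there is a network $\CandidateFunction_0 \in \NetFamilyFamily{\ActivationFn, \LayerWidth}$ with activation $\ActivationFn$ and hidden widths $\LayerWidth_1, \dots, \LayerWidth_{\NetLength - 1} \le \LayerWidth$ that $(\epsilon/2, \PreSubset)$-approximates $\TargetFunction$ (its input layer already has width $\LayerWidth$, since the domain of $\TargetFunction$ is $\PreSpace$). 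I will spend the remaining $\epsilon/2$ of error on (i) replacing $\ActivationFn$ by a one-to-one activation, (ii) widening every hidden layer to the full dimension $\LayerWidth$, and (iii) perturbing the weight matrices to be nonsingular.

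First I would swap the activation. Writing $\CandidateFunction_0$ as an alternating composition of affine maps and coordinatewise applications of $\ActivationFn$, replace each $\ActivationFn$ by a continuous one-to-one $\ActivationFn_\SequenceIndex$ from the sequence supplied by the hypothesis, with $\ActivationFn_\SequenceIndex \to \ActivationFn$ uniformly on $\R$. The realized function depends continuously on the activation, uniformly on $\PreSubset$: propagating through the finitely many layers, the preactivations entering each layer range over a fixed compact set on which $\ActivationFn$ is uniformly continuous, the affine maps are globally Lipschitz, and $\ActivationFn_\SequenceIndex$ is uniformly within a vanishing amount of $\ActivationFn$, so the accumulated discrepancy is bounded by the (fixed) product of Lipschitz constants times that amount and tends to $0$. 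For $\SequenceIndex$ large the resulting network $\CandidateFunction_1$ has a one-to-one activation and $(3\epsilon/4, \PreSubset)$-approximates $\TargetFunction$.

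Next I would widen each hidden layer of width $\LayerWidth_\LayerIndex < \LayerWidth$ to width exactly $\LayerWidth$ without changing the function at all: append $\LayerWidth - \LayerWidth_\LayerIndex$ nodes with arbitrary incoming weights but with their outgoing weights (the corresponding columns of the next matrix) set to zero, so they compute something yet contribute nothing downstream. The realized function is literally unchanged, every matrix between the now full-width layers $0, \dots, \NetLength - 1$ is square $\LayerWidth \times \LayerWidth$, and the final map to the one-dimensional output is a $1 \times \LayerWidth$ projection; call this $\CandidateFunction_2$. Finally, since the singular matrices form a proper algebraic (hence nowhere dense) subset, each square weight matrix can be moved an arbitrarily small distance to a nonsingular one and the output row can be taken nonzero; by the same inductive propagation-of-error estimate as in the first step, now in the weights, a small enough perturbation changes the function by less than $\epsilon/4$ on $\PreSubset$. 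The one thing to verify is that reconnecting the dummy nodes — the formerly zero columns are now slightly nonzero — does not spoil the approximation, but this effect is again bounded by the perturbation size times fixed Lipschitz constants and is absorbed into the $\epsilon/4$ budget. The result $\CandidateFunction_3$ has a continuous one-to-one activation, full-width hidden layers, and nonsingular weight matrices, so $\CandidateFunction_3 \in \NetProperFamily{\LayerWidth}$ and $(\epsilon, \PreSubset)$-approximates $\TargetFunction$; as $\PreSubset$ and $\epsilon$ were arbitrary, $\NetProperFamily{\LayerWidth}$ approximates $\TargetFunction$.

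The hard part will be purely the bookkeeping behind the two continuity arguments in steps (i) and (iii): making precise that the realized network function varies continuously, uniformly on $\PreSubset$, as a joint function of the activation (in the sup norm) and of the weights, which requires carrying a single compact bounding set through the whole composition and invoking uniform continuity of $\ActivationFn$ on it. The exactness of the widening step and the density of the nonsingular matrices are routine by comparison, which is why the outline calls this lemma fairly immediate.
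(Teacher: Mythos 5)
Your proposal is correct and follows essentially the same route as the paper: reduce to approximating a single network in $\NetFamilyFamily{\ActivationFn, \LayerWidth}$, widen hidden layers exactly with zeroed weights, then perturb the weight matrices to nonsingular ones and the activation to a one-to-one approximant, controlling the error via stability of compositions under uniform perturbation on compact sets. The only cosmetic difference is that the paper packages your inline propagation-of-error bookkeeping into a separate technical lemma (Lemma~\ref{technicallem}, stated with only a proof sketch) and applies it once to all factors simultaneously rather than budgeting the error over three sequential modifications.
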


To prove this Lemma, we will employ a technical result from point-set topology, relying on the fact that a function in $\NetFamilyFamily{\ActivationFn, \LayerWidth}$ can be written as a composition of linear functions defined by the weights between successive layers, and non-linear functions defined by the activation function $\ActivationFn$.

\begin{lemma}
\label{technicallem}
Assume $\CandidateFunction : \PreSpace \to \PostSpace$ is a function that can be written as a composition $\CandidateFunction = \CandidateFunction_\CompositionLength \circ \cdots \circ \CandidateFunction_1 \circ \CandidateFunction_0$ where each $\CandidateFunction_{\SequenceIndex} : \R^{\PreDim_\SequenceIndex} \to \R^{\PreDim_{\SequenceIndex + 1}}$ is a continuous function. Let $\PreSubset \subset \PreSpace$ be a compact subset and choose $\SmallY > 0$. 

Then there is a compact subset $\PreSubset_\SequenceIndex \subset \R^{\PreDim_\SequenceIndex}$ for each $\SequenceIndex$ and $\SmallX > 0$ such that if $\TargetFunction_{\SequenceIndex} : \R^{\PreDim_\SequenceIndex} \to \R^{\PreDim_{\SequenceIndex + 1}}$ is a function that $(\SmallX, \PreSubset_\SequenceIndex)$-approximates $\CandidateFunction_{\SequenceIndex}$ for each $\SequenceIndex$ then the composition $\TargetFunction = \TargetFunction_\CompositionLength \circ \cdots \circ \TargetFunction_1 \circ \TargetFunction_0$ will $(\SmallY, \PreSubset)$-approximate $\TargetFunction$.
\end{lemma}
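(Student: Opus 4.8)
The plan is to track how the approximation error in each factor $f_i$ propagates through the composition, controlling that propagation by the uniform continuity of the factors on suitable compact sets. Write the ``true'' trajectory of a point $x_0 \in A$ as $x_{i+1} = f_i(x_i)$ and the ``approximate'' trajectory as $y_0 = x_0$, $y_{i+1} = g_i(y_i)$, so that $x_{\kappa+1} = f(x_0)$ and $y_{\kappa+1} = g(x_0)$. Setting $e_i = |y_i - x_i|$, the triangle inequality at each stage gives
\[ e_{i+1} = |g_i(y_i) - f_i(x_i)| \le |g_i(y_i) - f_i(y_i)| + |f_i(y_i) - f_i(x_i)|, \]
where the first term is bounded by $\delta$ \emph{provided} $y_i$ lies in the set $A_i$ on which $g_i$ approximates $f_i$, and the second is controlled in terms of $e_i$ by the modulus of continuity of $f_i$ on $A_i$. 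The whole lemma then reduces to showing that the finite recursion $e_0 = 0$, $e_{i+1} \le \delta + (\text{small function of } e_i)$, can be driven below $\varepsilon$ by taking $\delta$ small.

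The one point requiring care is that the bound $|g_i(y_i) - f_i(y_i)| < \delta$ only holds on $A_i$, so I must guarantee that the approximate trajectory never leaves these sets, while arranging that the sets themselves do not depend on $\delta$ or on the unknown $g_i$. I would break this circularity by defining the compact sets generously and once and for all: set $A_0 = A$ and inductively let $A_{i+1}$ be the closed $1$-neighborhood of $f_i(A_i)$. Since $A_i$ is compact and $f_i$ continuous, $f_i(A_i)$ is compact, so $A_{i+1}$ is closed and bounded, hence compact; crucially these sets are built only from the $f_i$. On each $A_i$ the factor $f_i$ is uniformly continuous, so it has a nondecreasing modulus of continuity $\omega_i$ with $\omega_i(t) \to 0$ as $t \to 0^+$.

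With the $A_i$, and hence the $\omega_i$, fixed, I would choose $\delta$ by a finite backward sweep. Set a tolerance $\rho_{\kappa+1} = \varepsilon$, and for $i = \kappa, \ldots, 1$ pick $\rho_i \le 1$ small enough that $\omega_i(\rho_i) < \rho_{i+1}/2$ (possible by uniform continuity); put $\rho_0 = 0$, and then take $\delta = \frac{1}{2}\min\{\rho_1, \ldots, \rho_{\kappa+1}\} > 0$. A forward induction closes the argument: assuming $e_i < \rho_i \le 1$, the bound $e_i < 1$ places $y_i$ within distance $1$ of $x_i = f_{i-1}(x_{i-1}) \in f_{i-1}(A_{i-1})$, so $y_i \in A_i$ and the approximation $|g_i(y_i) - f_i(y_i)| < \delta$ is valid; since also $x_i \in A_i$, uniform continuity gives $|f_i(y_i) - f_i(x_i)| \le \omega_i(e_i) \le \omega_i(\rho_i) < \rho_{i+1}/2$, and with $\delta \le \rho_{i+1}/2$ the recursion yields $e_{i+1} < \rho_{i+1}$. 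Starting from $e_0 = 0 = \rho_0$ this propagates to $e_{\kappa+1} < \rho_{\kappa+1} = \varepsilon$, which is exactly the assertion that $g$ $(\varepsilon, A)$-approximates $f$.

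The main obstacle is precisely this decoupling. The naive idea of enlarging the compact sets so that they contain the approximate trajectory fails, because that trajectory depends on the very $\delta$ one is trying to pin down; fixing the $A_i$ with a $\delta$-independent margin (the $1$-neighborhood) and only afterward verifying \emph{a posteriori} that the approximate trajectory stays within that margin is what renders the two choices, compact sets and tolerance, mutually consistent. The remaining ingredients, namely compactness of the $A_i$, existence of the moduli of continuity, and the elementary finite recursion, are routine.
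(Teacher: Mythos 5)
Your proof is correct and follows essentially the same route that the paper sketches but does not write out: compact sets taken as fixed-radius neighborhoods of the successive images, a backward choice of tolerances via uniform continuity on those sets, and a forward error recursion. The only blemish is the base case---with $\rho_0 = 0$ the inductive hypothesis $e_0 < \rho_0$ is vacuously false, but since $y_0 = x_0$ the first step gives $e_1 < \delta \le \rho_1/2 < \rho_1$ directly, so nothing is lost.
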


One can prove Lemma~\ref{technicallem} by induction on the number of functions in the composition, choosing each $\PreSubset_\SequenceIndex \subset \R^{\PreDim_\SequenceIndex}$ to be a closed $\SmallY$-neighborhood of the image of $\PreSubset$ in the composition up to $\SequenceIndex$. For each new function, the $\SmallX$ on the compact set tells you what $\SmallX$ you need to choose for the composition of the preceding functions. We will not include the details here.

\begin{proof}[Proof of Lemma~\ref{limitsetsthesamelem}]
We'll prove this Lemma by showing that $\NetProperFamily{\LayerWidth}$ approximates any given function in $\NetFamilyFamily{\ActivationFn, \LayerWidth}$. Then, given $\SmallY > 0$, a compact set $\PreSubset \subset \PreSpace$ and a function $\TargetFunction$ that is approximated by $\NetFamilyFamily{\ActivationFn, \LayerWidth}$, we can choose a function $\CandidateFunction \in \NetFamilyFamily{\ActivationFn, \LayerWidth}$ that $(\SmallY / 2, \PreSubset)$-approximates $\TargetFunction$ and a function in $\NetProperFamily{\LayerWidth}$ that $(\SmallY / 2, \PreSubset)$-approximates $\CandidateFunction$.

So we will reset the notation, let $\TargetFunction$ be a function in $\NetFamilyFamily{\ActivationFn, \LayerWidth}$, let $\PreSubset \subset \PreSpace$ be a compact subset and choose $\SmallY > 0$. As noted above, $\TargetFunction$ is a composition $\TargetFunction = \NonLinearFn_\NetLength \circ \LinearFn_\NetLength \circ \cdots \circ \NonLinearFn_0 \circ \LinearFn_0$ where each $\LinearFn_\SequenceIndex$ is a linear function defined by the weights between consecutive layers and each $\NonLinearFn_\SequenceIndex$ is a nonlinear function defined by a direct product of the activation function $\ActivationFn$.

If any of the hidden layers in the network defining $\TargetFunction$ have dimension strictly less than $\PreDim$ then we can define the same function with a network in which that layer has dimension exactly $\PreDim$, but the weights in and out of the added neurons are all zero. Therefore, we can assume without loss of generality that all the hidden layers in $\TargetFunction$ have dimension exactly $\PreDim$, though the linear functions may be singular.

Let $\{\PreSubset_\SequenceIndex\}$ and $\SmallX > 0$ be as defined by Lemma~\ref{technicallem}. We want to find functions $\hat \NonLinearFn_\SequenceIndex$ and $\hat \LinearFn_\SequenceIndex$ that $(\SmallX, \PreSubset_\SequenceIndex)$-approximate each $\NonLinearFn_\SequenceIndex$ and $\LinearFn_\SequenceIndex$ and whose composition is in $\NetProperFamily{\LayerWidth}$.

For the composition to be in $\NetProperFamily{\LayerWidth}$, we need each $\hat \LinearFn_\SequenceIndex$ to be non-singular. If $\LinearFn_\SequenceIndex$ is already non-singular, then we choose $\hat \LinearFn_\SequenceIndex = \LinearFn_\SequenceIndex$. Otherwise, we can perturb the weights that define the linear map $\LinearFn_\SequenceIndex$ by an arbitrarily small amount to make it non-singular. In particular, we can choose this arbitrarily small amount to be small enough that the function values change by less than $\SmallX$ on $\PreSubset_\SequenceIndex$.

Similarly, we want each $\hat \NonLinearFn_\SequenceIndex$ to be a direct product of a continuous, one-to-one activation functions. By assumption, $\ActivationFn$ can be approximated by such functions and we can choose the tolerance for this approximation to be small enough that $\hat \NonLinearFn_\SequenceIndex$ $(\SmallX, \PreSubset_\SequenceIndex)$-approximates $\NonLinearFn_\SequenceIndex$. In fact, we can choose a single activation function for all the nonlinear layers, on each corresponding compact set.

Thus we can choose each $\hat \LinearFn_\SequenceIndex$ and an activation function $\hat \ActivationFn$ that defines all the functions $\NonLinearFn_\SequenceIndex$, so that the composition is in $\NetProperFamily{\LayerWidth}$ and, by Lemma~\ref{technicallem}, the composition $(\SmallY, \PreSubset)$-approximates $\TargetFunction$.
\end{proof}

\section{Characterizing level sets in a limit of functions.}
\label{section:levelsets}

Lemma~\ref{limitsetsthesamelem} implies that if $\NetFamilyFamily{\ActivationFn, \LayerWidth}$ is universal then so is $\NetProperFamily{\LayerWidth}$. So to prove Theorem~\ref{mainthm}, we will show that every function in $\NetProperFamily{\LayerWidth}$ has level sets with only unbounded components, then show that this property extends to any function that it approximates.

\begin{lemma}
\label{planarsubsetlemma}
If $\CandidateFunction$ is a function in $\NetProperFamily{\LayerWidth}$ then every level set $\CandidateFunction^{-1}(\PostPoint)$ is homeomorphic to an open (possibly empty) subset of $\R^{\LayerWidth - 1}$. This implies that $\CandidateFunction$ has unbounded level components.
\end{lemma}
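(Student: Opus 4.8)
The plan is to exploit the non-singular structure: a function $\CandidateFunction \in \NetProperFamily{\LayerWidth}$ is a composition $\NonLinearFn_\NetLength \circ \LinearFn_\NetLength \circ \cdots \circ \NonLinearFn_0 \circ \LinearFn_0$ where every $\LinearFn_\SequenceIndex$ is a nonsingular $\LayerWidth \times \LayerWidth$ matrix (hence a homeomorphism $\PreSpace \to \PreSpace$) and every $\NonLinearFn_\SequenceIndex$ is a direct product of a continuous one-to-one map $\ActivationFn : \R \to \R$ (hence a homeomorphism onto its image $\ActivationFn(\R)^{\LayerWidth}$). The only exception is the very last step: the final layer has output dimension $1$, so the last linear map $\LastLayer{\CandidateFunction} = \LinearFn_\NetLength$ is a linear projection $\PreSpace \to \R$ composed with the final (scalar) activation, while everything before it composes into a single one-to-one map $\Cutoff{\CandidateFunction} : \PreSpace \to \PreSpace$, as in the top row of Figure~\ref{fig:onetoone}. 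So I would first factor $\CandidateFunction = \LastLayer{\CandidateFunction} \circ \Cutoff{\CandidateFunction}$ with $\Cutoff{\CandidateFunction}$ a homeomorphism onto an open subset $\OpenSet = \Cutoff{\CandidateFunction}(\PreSpace) \subset \PreSpace$ (open because it is a finite composition of homeomorphisms and products of the one-to-one activation, whose image $\ActivationFn(\R)$ is an open interval since $\ActivationFn$ is continuous and injective on $\R$), and $\LastLayer{\CandidateFunction}$ an affine-linear nonzero functional on $\PreSpace$.

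Next I would identify the level set. Since $\Cutoff{\CandidateFunction}$ is a homeomorphism onto $\OpenSet$, the level set $\CandidateFunction^{-1}(\PostPoint) = \Cutoff{\CandidateFunction}^{-1}\bigl(\LastLayer{\CandidateFunction}^{-1}(\PostPoint) \cap \OpenSet\bigr)$ is homeomorphic to $\LastLayer{\CandidateFunction}^{-1}(\PostPoint) \cap \OpenSet$. Now $\LastLayer{\CandidateFunction}^{-1}(\PostPoint)$ is the preimage of a point under a nonzero affine functional on $\PreSpace$, i.e.\ an affine hyperplane $H \cong \R^{\LayerWidth - 1}$. Intersecting the open set $\OpenSet$ with $H$ gives an open subset of $H$, so $\CandidateFunction^{-1}(\PostPoint)$ is homeomorphic to an open (possibly empty) subset of $\R^{\LayerWidth - 1}$, which is the first claim. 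I should double-check the edge case where the final scalar activation makes $\PostPoint$ unattainable, in which case the level set is empty and the statement holds vacuously.

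Finally I would deduce unbounded level components from this homeomorphism type. The key point is that a nonempty open subset $V$ of $\R^{\LayerWidth - 1}$ is a topological manifold without boundary of dimension $\LayerWidth - 1 \ge 1$, so no path component of $V$ can be compact: a compact path component would be both open (path components of a locally path-connected space are open) and closed in $\R^{\LayerWidth - 1}$, forcing it to be all of $\R^{\LayerWidth - 1}$, which is not compact. Transporting this back through the homeomorphism $\Cutoff{\CandidateFunction}$, I must argue that the corresponding path component of $\CandidateFunction^{-1}(\PostPoint)$ is noncompact, hence unbounded as a subset of $\PreSpace$ (a path component that were bounded would have compact closure; I would note that each path component is closed in the level set, and since $\Cutoff{\CandidateFunction}$ is a homeomorphism onto $\OpenSet$ it carries noncompact path components to noncompact ones). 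The main obstacle is being careful about the word ``unbounded'': noncompactness of a subset of $\PreSpace$ does not by itself imply unboundedness, since the set might be bounded but fail to contain its limit points. Here the safe argument is that each path component of $\CandidateFunction^{-1}(\PostPoint)$ is closed in $\PreSpace$ (level sets are closed and their path components are closed because $\PreSpace$ is locally path-connected), so a bounded path component would be compact, contradicting the transported noncompactness; hence every path component is unbounded, which is exactly the conclusion.
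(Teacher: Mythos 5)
Your proposal is correct and follows essentially the same route as the paper: factor $\CandidateFunction = \LastLayer{\CandidateFunction} \circ \Cutoff{\CandidateFunction}$ with $\Cutoff{\CandidateFunction}$ a homeomorphism onto an open image, identify the level set with an open subset of the hyperplane $\LastLayer{\CandidateFunction}^{-1}(\PostPoint)$, and rule out bounded components by a closed-plus-bounded-implies-compact argument against the fact that no nonempty open subset of $\R^{\LayerWidth-1}$ is compact. If anything, you are slightly more careful than the paper at the last step, running the clopen/compactness argument per path component (using local path-connectedness of the level set) rather than for the level set as a whole, and justifying openness of the image via monotonicity of $\ActivationFn$ rather than implicitly via invariance of domain.
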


\begin{proof}
Assume $\CandidateFunction$ is a non-singular function in $\NetProperFamily{\LayerWidth}$, where $\ActivationFn$ is continuous and one-to-one. Let $\Cutoff{\CandidateFunction} : \R^\LayerWidth \to \R^\LayerWidth$ be the function defined by all but the last layer of the network. Let $\LastLayer{\CandidateFunction} : \R^\LayerWidth \to \R$ be the function defined by the map from the last hidden layer to the final output layer so that $\CandidateFunction = \LastLayer{\CandidateFunction} \circ \Cutoff{\CandidateFunction}$.

The function $\Cutoff{\CandidateFunction}$ is a composition of the linear functions defined by the network weights and the non-linear function at each step defined by applying the activation function to each dimension. Because $\CandidateFunction$ is nonsingular, the linear functions are all one-to one. Because $\ActivationFn$ is continuous and one-to-one, so are all the non-linear functions. Thus the composition $\Cutoff{\CandidateFunction}$ is also one-to-one, and therefore a homeomorphism from $\R^\LayerWidth$ onto its image $\FnImage{\Cutoff{\CandidateFunction}}$. Since $\R^\LayerWidth$ is homeomorphic to an open $\LayerWidth$-dimensional ball, $\FnImage{\Cutoff{\CandidateFunction}}$ is an open subset of $\R^\LayerWidth$, as indicated in the top row of Figure~\ref{fig:onetoone}.

The function $\LastLayer{\CandidateFunction}$ is the composition of a linear function to $\R$ with $\ActivationFn$, which is one-to-one by assumption. So the preimage $\LastLayer{\CandidateFunction}^{-1}(\PostPoint)$ for any $\PostPoint \in \R$ is an $(\LayerWidth - 1)$-dimensional plane in $\R^\LayerWidth$. The preimage $\CandidateFunction^{-1}(\PostPoint)$ is the preimage in $\Cutoff{\CandidateFunction}$ of this $(\LayerWidth - 1)$-dimensional plane, or rather the preimage of the intersection $\FnImage{\Cutoff{\CandidateFunction}} \cap \LastLayer{\CandidateFunction}^{-1}(\PostPoint)$, as indicated in the bottom right/center of the Figure. Since $\FnImage{\Cutoff{\CandidateFunction}}$ is open as a subset of $\R^\LayerWidth$, the intersection is open as a subset of $\LastLayer{\CandidateFunction}^{-1}(\PostPoint)$. 

Since $\Cutoff{\CandidateFunction}$ is one-to-one, its restriction to this preimage (shown on the bottom left of the Figure) is a homeomorphism from $\CandidateFunction^{-1}(\PostPoint)$ to this open subset of the $(\LayerWidth - 1)$-dimensional plane $\LastLayer{\CandidateFunction}^{-1}(\PostPoint)$. Thus $\CandidateFunction^{-1}(\PostPoint)$ is homeomorphic to an open subset of $\R^{\PreDim - 1}$.

Finally, recall that the preimage in a continuous function of a closed set is closed, so $\CandidateFunction^{-1}(\PostPoint)$ is closed as a subset of $\R^\LayerWidth$. If it were also bounded, then it would be compact. However, the only compact, open subset of $\R^{\LayerWidth - 1}$ is the empty set, so $\CandidateFunction^{-1}(\PostPoint)$ is either unbounded or empty. Since each path component of a subset of $\R^{\LayerWidth - 1}$ is by definition non-empty, this proves that any component of $\CandidateFunction$ is unbounded.
\end{proof}

All that remains is to show that this property extends to the functions that $\NetProperFamily{\LayerWidth}$ approximates.

\begin{lemma}
\label{limitisunboundedlem}
If $\ModelFamily$ is a model family in which every function has unbounded level components then any function approximated by $\ModelFamily$ has unbounded level components.
\end{lemma}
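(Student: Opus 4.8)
The plan is to argue by contraposition: assuming $\TargetFunction$ is approximated by $\ModelFamily$ yet some level set $\TargetFunction^{-1}(\FixedPostPoint)$ has a bounded path component $\BoundedComponent$, I would produce a single function $\CandidateFunction \in \ModelFamily$ that is forced to have a bounded level component, contradicting the hypothesis on $\ModelFamily$. The engine of the contradiction is a trapping argument: if I can find a bounded open set $\BoundedOpen \supset \BoundedComponent$ and a value $\PostPoint$ such that $\CandidateFunction$ never equals $\PostPoint$ on the frontier $\Frontier = \partial \BoundedOpen$ but does equal $\PostPoint$ at some interior point, then any path inside $\CandidateFunction^{-1}(\PostPoint)$ emanating from that interior point can never meet $\Frontier$ (since $\Frontier \cap \CandidateFunction^{-1}(\PostPoint) = \emptyset$) and hence can never leave the open set $\BoundedOpen$; thus the whole path component of $\CandidateFunction^{-1}(\PostPoint)$ meeting $\BoundedOpen$ stays inside $\BoundedOpen$ and is bounded.

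The first main step is to isolate $\BoundedComponent$. Since $\BoundedComponent$ is bounded, $\overline{\BoundedComponent}$ is compact, and I would seek a bounded, connected, open $\BoundedOpen$ containing $\BoundedComponent$ whose frontier $\Frontier$ is disjoint from $\TargetFunction^{-1}(\FixedPostPoint)$. Because $\Frontier$ is then compact and misses the level set, the quantity $\ComponentBuffer = \min_{\PrePoint \in \Frontier} \distance{\TargetFunction(\PrePoint)}{\FixedPostPoint}$ is strictly positive, giving a genuine buffer separating the behaviour of $\TargetFunction$ on $\Frontier$ from the value $\FixedPostPoint$ it takes on $\BoundedComponent$.

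The second step selects the value $\PostPoint$ and the approximant. Choosing $\CandidateFunction \in \ModelFamily$ that $(\SmallX, \overline{\BoundedOpen})$-approximates $\TargetFunction$ on the compact set $\overline{\BoundedOpen}$ with $\SmallX$ much smaller than $\ComponentBuffer$, the frontier values of $\CandidateFunction$ stay within $\SmallX$ of $\TargetFunction(\Frontier)$, hence at distance at least $\ComponentBuffer - \SmallX$ from $\FixedPostPoint$. If $\TargetFunction$ takes values on both sides of $\FixedPostPoint$ inside $\BoundedOpen$ one can simply use $\PostPoint = \FixedPostPoint$; in the one-sided case, where $\BoundedComponent$ sits at a local extremum (for instance $\TargetFunction(\PrePoint) = \distance{\PrePoint}{0}^2$ at $\FixedPostPoint = 0$, whose zero set is the bounded component $\{0\}$), I would instead shift to an intermediate value $\PostPoint$ strictly between $\FixedPostPoint$ and $\FixedPostPoint + \ComponentBuffer$ (or $\FixedPostPoint - \ComponentBuffer$). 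Either way, $\CandidateFunction$ is bounded away from $\PostPoint$ on all of $\Frontier$ while crossing $\PostPoint$ between a point of $\BoundedComponent$ and a frontier point; connectedness of $\BoundedOpen$ together with the intermediate value theorem along a path then places a point of $\CandidateFunction^{-1}(\PostPoint)$ inside $\BoundedOpen$, and the trapping argument completes the contradiction.

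I expect the isolation step to be the main obstacle, and it is exactly the point where the path-component formulation, rather than mere boundedness of level sets, is essential. The difficulty is that $\BoundedComponent$ need not be closed: other path components of $\TargetFunction^{-1}(\FixedPostPoint)$, possibly unbounded, can accumulate onto $\overline{\BoundedComponent}$, so no bounded $\BoundedOpen$ with frontier avoiding the entire level set need exist. Handling this requires a careful construction of $\BoundedOpen$ from a small neighborhood $\Neighborhood{\BoundedComponent}$ of $\overline{\BoundedComponent}$, controlling the frontier $\Frontier$ and the level distance $\LevelDist$ to the nearby components, and possibly passing to a nearby value $\PostPoint$ at which a genuinely separable bounded component appears (for example the ``pockets'' trapped between successive strands of an accumulating component). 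This is the technical heart of the lemma, and it explains why the weaker property ``all level sets bounded'' is not preserved under limits while ``unbounded level components'' is.
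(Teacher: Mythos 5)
Your overall strategy --- isolate $\BoundedComponent$ inside a bounded open set whose frontier misses the level set, approximate $\TargetFunction$ on a compact set containing its closure, and trap the (supposedly unbounded) level component of the approximant inside that open set --- is exactly the strategy of the paper's proof. Two of the points you labor over are handled more simply there. First, the case split on whether $\TargetFunction$ crosses the level value from both sides is unnecessary: the paper takes the target value to be $\PostPoint' = \CandidateFunction(\PrePoint)$ for a point $\PrePoint \in \BoundedComponent$, so $\CandidateFunction^{-1}(\PostPoint')$ meets the trap by fiat, and the approximation bound keeps $\PostPoint'$ inside the interval $\OpenSet = (\PostPoint - \SmallY, \PostPoint + \SmallY)$ chosen to miss $\TargetFunction(\Frontier)$. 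Second, the paper runs the trapping through $\TargetFunction$ rather than through the frontier values of $\CandidateFunction$: a path in $\CandidateFunction^{-1}(\PostPoint')$ starting at $\PrePoint$ and truncated to the ball $\Ball{\LargerRadius}{\PrePoint}$ has its $\TargetFunction$-values in $\OpenSet$, hence lies in the component $\BoundedOpen$ of $\TargetFunction^{-1}(\OpenSet)$ containing $\BoundedComponent$, whose points are all within distance $\BoundedRadius < \LargerRadius$ of $\PrePoint$ --- contradicting that the path reaches the sphere of radius $\LargerRadius$. Your variant (the path cannot cross $\Frontier$ because $\CandidateFunction \neq \PostPoint'$ there) is an equivalent and, if anything, slightly cleaner way to finish.

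The genuine gap is the one you flag yourself: the isolation step. You correctly observe that a path component of a closed level set need not be closed, and that other strands of $\TargetFunction^{-1}(\PostPoint)$ could accumulate on $\BoundedComponent$, in which case no bounded neighborhood with level-set-free frontier exists; but you then only gesture at ``a careful construction'' and ``passing to a nearby value,'' which is not a proof. Be aware that the paper does not resolve this either: it asserts that $\BoundedComponent$ is compact (on the grounds that level sets are closed and bounded closed sets are compact, which does not by itself apply to a path component of a closed set) and that one can therefore choose $\ComponentBuffer > 0$ with every point of $\TargetFunction^{-1}(\PostPoint) \setminus \BoundedComponent$ at distance greater than $\ComponentBuffer$ from $\BoundedComponent$; the whole construction of $\Neighborhood{\BoundedComponent}$, $\Frontier$, $\OpenSet$ and $\BoundedOpen$ rests on that assertion. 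So to reproduce the paper's argument you should simply assume the separation outright, as it does; actually closing the hole you identified would require an additional argument that neither your proposal nor the paper supplies.
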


\begin{proof}
Let $\TargetFunction : \PreSpace \to \R$ be a function with a level set $\TargetFunction^{-1}(\PostPoint)$ containing a bounded path component $\BoundedComponent$. Note that level sets are closed as subsets of $\PreSpace$ and bounded, closed sets are compact so $\BoundedComponent$ is compact. We can therefore choose a value $\ComponentBuffer$ such that any point of $\TargetFunction^{-1}(\PostPoint)$ outside of $\BoundedComponent$ is distance greater than $\ComponentBuffer$ from every point in $\BoundedComponent$.

Let $\Neighborhood{\BoundedComponent}$ be the set of all points that are distance strictly less than $\ComponentBuffer / 2$ from $\BoundedComponent$. This is an open subset of $\PreSpace$, shown as the shaded region in the center of Figure~\ref{fig:regions}, and we will let $\Frontier$ be the frontier of $\Neighborhood{\BoundedComponent}$ -- the set of all points that are limit points of both $\Neighborhood{\BoundedComponent}$ and limit points of its complement.

\begin{figure}
  \includegraphics[width=\textwidth]{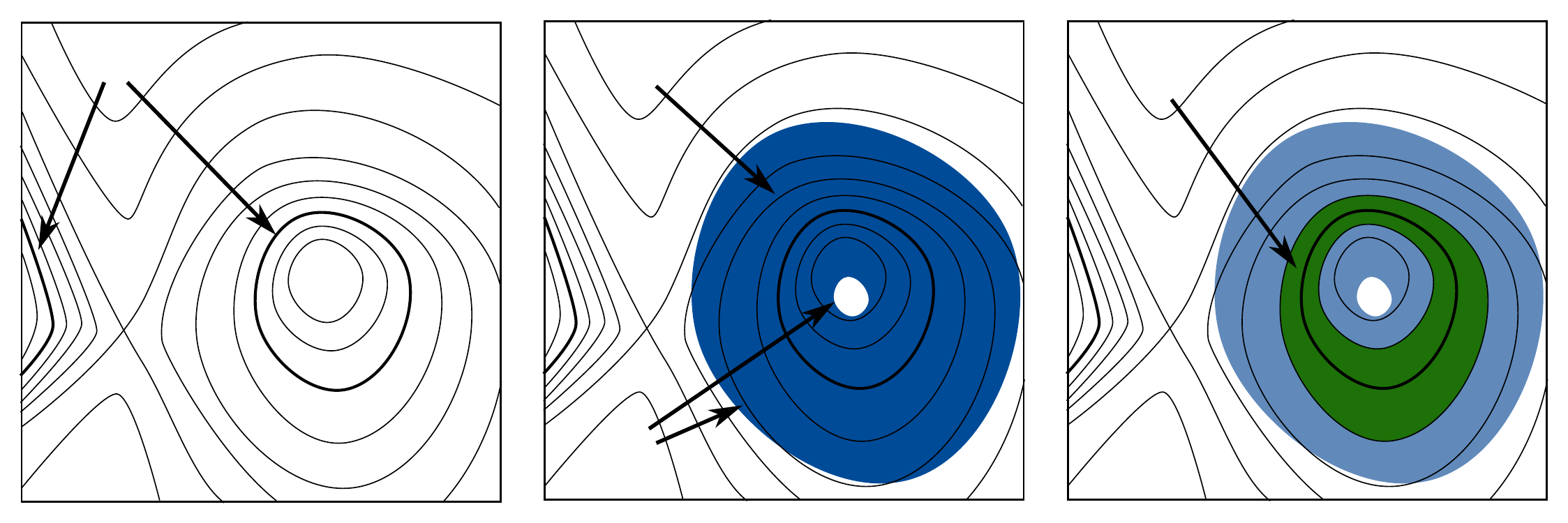}
  \put(-323, 100){$\BoundedComponent$}
  \put(-211, 100){$\Neighborhood{\BoundedComponent}$}
  \put(-94, 98){$\BoundedOpen$}
  \put(-212, 14){$\Frontier$}
  \caption{In the proof of Lemma~\ref{limitsetsthesamelem}, we choose $\SmallY$ by finding an intercal $\OpenSet = (\PostPoint - \SmallY, \PostPoint + \SmallY)$ whose preimage contains a bounded component $\BoundedOpen$.}
  \label{fig:regions}
\end{figure}

By construction, every point in $\Frontier$ is distance $\ComponentBuffer / 2$ from $\BoundedComponent$ so $\Frontier$ is disjoint from $\BoundedComponent$. Moreover, since every point in $\TargetFunction^{-1}(\PostPoint) \setminus \BoundedComponent$ is distance at least $\ComponentBuffer$ from $\BoundedComponent$, $\Frontier$ is disjoint from the rest of $\TargetFunction^{-1}(\PostPoint)$ as well, so $\PostPoint$ is in the complement of $\TargetFunction(\Frontier)$.

The frontier is the intersection of two closed sets, so $\Frontier$ is closed. It's also bounded, since all points are a bounded distance from $\BoundedComponent$, so $\Frontier$ is compact. This implies that $\TargetFunction(\Frontier)$ is a compact subset of $\R$, so its complement is open. Since $\PostPoint$ is in the complement of $\TargetFunction(\Frontier)$, this means that there is an open interval $\OpenSet = (\PostPoint - \SmallY, \PostPoint + \SmallY)$ that is disjoint from $\TargetFunction(\Frontier)$.

Let $\BoundedOpen$ be the component of $\TargetFunction^{-1}(\OpenSet)$ that contains $\BoundedComponent$, as indicated on the right of the Figure. Note that this set intersects $\Neighborhood{\BoundedComponent}$ but is disjoint from its frontier. So $\BoundedOpen$ must be contained in $\Neighborhood{\BoundedComponent}$, and is therefore bounded as well. In particular, each level set that intersects $\BoundedOpen$ has a compact component in $\BoundedOpen$.

Let $\PrePoint$ be a point in $\BoundedComponent \subset \BoundedOpen$. Since $\BoundedOpen$ is bounded, there is a value $\BoundedRadius$ such that every point in $\BoundedOpen$ is distance at most $\BoundedRadius$ from $\PrePoint$.

Assume for contradiction that $\TargetFunction$ is approximated by a model family $\ModelFamily$ in which each function has unbounded level components. Choose $\LargerRadius > \BoundedRadius$ and let $\Ball{\LargerRadius}{\PrePoint}$ be a closed ball of radius $\LargerRadius$, centered at $\PrePoint$. Because this is a compact set and $\TargetFunction$ is approximated by $\ModelFamily$, we can choose a function $\CandidateFunction \in \ModelFamily$ that $(\SmallY / 2, \Ball{\LargerRadius}{\PrePoint})$-approximates $\TargetFunction$.

Then $\distance{\CandidateFunction(\PrePoint)}{\TargetFunction(\PrePoint)} < \SmallY / 2$ so $\CandidateFunction(\PrePoint) \in [\PostPoint - \SmallY / 2, \PostPoint + \SmallY / 2] \subset \OpenSet$ and we will define $\PostPoint' = \CandidateFunction(\PrePoint)$.

Since $\CandidateFunction \in \ModelFamily$, every path component of $\CandidateFunction^{-1}(\PostPoint')$ is unbounded, so there is a path $\Path \subset \CandidateFunction^{-1}(\PostPoint')$ from $\PrePoint$ to a point that is distance $\LargerRadius$ from $\PrePoint$. If $\Path$ passes outside of $\Ball{\LargerRadius}{\PrePoint})$,  we can replace $\Path$ with the component of $\Path \cap \Ball{\LargerRadius}{\PrePoint})$ containing $\PrePoint$ to ensure that $\Path$ stays inside of $\Ball{\LargerRadius}{\PrePoint})$, but still reaches a point that is distance $\LargerRadius$ from $\PrePoint$.

Since every point $\PrePoint'' \in \Path$ is contained in $\Ball{\LargerRadius}{\PrePoint}$, we have $\distance{\CandidateFunction(\PrePoint'')}{\TargetFunction(\PrePoint'')} < \SmallY / 2$. This implies $\TargetFunction(\PrePoint'') \in [\PostPoint - \SmallY, \PostPoint + \SmallY] = \OpenSet$ so the path is contained in $\TargetFunction^{-1}(\OpenSet)$, and thus in the path component $\BoundedOpen$ of $\TargetFunction^{-1}(\OpenSet)$.

However, by construction the path $\Path$ ends at a point whose distance from $\PrePoint$ is $\LargerRadius > \BoundedRadius$, contradicting the assumption that every point in $\BoundedOpen$ is distance at most $\BoundedRadius$ from $\PrePoint$. This contradiction proves that $\TargetFunction$ cannot be approximated by a model family $\ModelFamily$ in which each function has unbounded level components.
\end{proof}

\begin{proof}[Proof of Theorem~\ref{mainthm}]
Let $\TargetFunction$ be a function that is approximated by $\NetFamilyFamily{\ActivationFn, \LayerWidth}$, where $\ActivationFn$ is a continuous activation function that can be uniformly approximated by one-to-one functions. 

By Lemma~\ref{limitsetsthesamelem}, since $\TargetFunction$ is approximated by $\NetFamilyFamily{\ActivationFn, \LayerWidth}$, it must also be approximated by $\NetProperFamily{\LayerWidth}$. By Lemma~\ref{planarsubsetlemma}, every function in $\NetProperFamily{\LayerWidth}$ has bounded level components, so Lemma~\ref{limitisunboundedlem} implies that every function that this family approximates has unbounded level sets. Therefore $\TargetFunction$ has unbounded level sets.
\end{proof}

\section{Experiments}
\label{section:experiments}

To demonstrate the effect of Theorem~\ref{mainthm}, we used the TensorFlow Neural Network Playground~\cite{nnplayground} to train two different networks on a standard synthetic dataset with one class centered at the origin of the two-dimensional plane, and the other class forming a ring around it. We trained two neural networks and examined the plot of the resulting functions to characterize the level sets/decision boundaries. In these plots, the decision boundary is visible as the white region between the blue and orange regions defining the two labels.

The first network has six two-dimensional hidden layers, the maximum number of layers allowed in the webapp. As shown in Figure~\ref{figure:skinny_nn}, the decision boundary is an unbounded curve that extends beyond the region containing all the data points. The ideal decision boundary between the two classes of points would be a (bounded) loop around the blue points in the middle, but Theorem~\ref{mainthm} proves that such a network cannot approximate a function with such a level set. A decision boundary such as the one shown in the Figure is as close as it can get. The extra hidden layers allow the decision boundary to curve around and minimize the neck of the blue region, but they do not allow it to pinch off completely.

\begin{figure}
  \centering
  \begin{subfigure}[t]{0.4\textwidth}
    \centering
    \includegraphics[width=\textwidth]{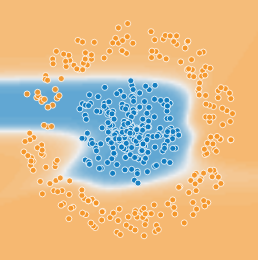}
    \caption{The decision boundary learned with six, two-dimensional hidden layers is an unbounded curve that extends outside the region visible in the image.}
  \label{figure:skinny_nn}
  \end{subfigure}
  \hspace{0.1\textwidth}
  \begin{subfigure}[t]{0.4\textwidth}
    \centering
    \includegraphics[width=\textwidth]{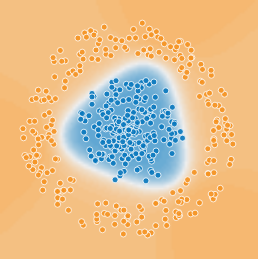}
    \caption{A network with a single three-dimensional hidden layer learns a bounded decision boundary relatively easily.}
  \label{figure:wide_nn}
  \end{subfigure}
\end{figure}

The second network has a single hidden layer of dimension three - one more than that of the input space. As shown in Figure~\ref{figure:wide_nn}, the decision boundary for the learned function is a loop that approximates the ideal decision boundary closely. It comes from the three lines defined by the hidden nodes, which make a triangle that gets rounded off by the activation function. Increasing the dimension of the hidden layer would make the decision boundary rounder, though in this case the model doesn't need the extra flexibility.

Note that this example generalizes to any dimension $\LayerWidth$, though without the ability to directly graph the results. In other words, for any Euclidean input space of dimension $\LayerWidth$, a sigmoid neural network with one hidden layer of dimension $\LayerWidth + 1$ can define a function that cannot be approximated by any deep network with an arbitrary number of hidden layers of dimension at most $\LayerWidth$. In fact, this will be the case for any activation function that is bounded above or below, though we will not include the details of the argument here.

\section{Conclusion}
\label{section:conclusion}

In this paper, we describe topological limitations on the types of functions that can be approximated by deep, skinny neural networks, independent of the number of hidden layers. We prove the result using standard set theoretic topology, then present examples that visually demonstrate the result.

This complements a body of existing literature that has demonstrated various limitations on neural networks that typically take a very different form and are expressed in terms of asymptotic network complexity. We expect that there is a great deal of remaining potential to explore further topological constraints on families of models, and to determine to what extent these topological constraints are simply a different way of describing more fundamental ideas that have been independently demonstrated elsewhere in other frameworks such as information theory.

\bibliographystyle{abbrv}
\bibliography{main}

\end{document}